\newenvironment{sproof}{%
  \proof}{\endproof}
\renewcommand{\emph}{\textbf}
\newtheorem{lemma}{Lemma}
\newtheorem{theorem}{Theorem}
\theoremstyle{definition}
\newtheorem{definition}{Definition}
\title{Multi-Radar Tracking Optimization for Collaborative Combat}
\author[1]{Nouredine Nour}
\author[1]{Reda Belhaj-Soullami}
\author[2]{Cédric L.R. Buron}
\author[3]{Alain Peres}
\author[3]{Frédéric Barbaresco}
\affil[1]{
NukkAI, Paris, France (www.nukk.ai),\\ email: \{nnour, rbelhaj-soullami\}@nukk.ai}
\affil[2]{	Thales Research \& Technology, 1 av. Augustin Fresnel, Palaiseau France\\ email: cedric.buron@thalesgroup.com}
\affil[3]{Thales Land \& Air Systems, 3 Avenue Charles Lindbergh, 94150 Rungis, France  \\
        email: \{alain.peres, frederic.barbaresco\}@thalesgroup.com
}
\begin{document}

\maketitle

\begin{abstract}
\sloppy
Smart Grids of collaborative netted radars accelerate kill chains through more efficient cross-cueing over centralized command and control. In this paper, we propose two novel reward-based learning approaches to decentralized netted radar coordination based on black-box optimization and Reinforcement Learning (RL). To make the RL approach tractable, we use a simplification of the problem that we proved to be equivalent to the initial formulation. We apply these techniques on a simulation where radars can follow multiple targets at the same time and show they can learn implicit cooperation by comparing them to a greedy baseline. 

\textbf{Keywords}: Netted sensors, Reinforcement Learning, Actor Critic, Evolutionary Algorithms, Multi-Agent Systems   
\end{abstract}

\section{Introduction}
Despite great interest in recent research, in particular in China \cite{dai2020sensor,shi2020joint} micromanagement of sensors by centralized command and control drives possible inefficiencies and risk into operations. Tactical decision making and execution by headquarters usually fail to achieve the speed necessary to meet rapid changes. 
Collaborative radars with C2 must provide decision superiority despite the attempts of an adversary to disrupt OODA cycles at all level of operations. Artificial intelligence can make a contribution for the purposes of coordinated conduct of the action, by improving the response time to threats and optimizing the allocation and the distribution of tasks within elementary smart radars.


In order to address this problem, Thales and the private research lab NukkAI have been collaborating to introduce novel approaches for netted radars. Thales provided the simulation modeling the multi-radar target allocation problem and NukkAI proposed two novel reward-based learning approaches for the problem.

In this paper, we present these two approaches: Evolutionary Single-Target Ordering (ESTO), which is based on evolution strategies and an RL approach based on Actor-Critic methods. To make the RL method tractable in practice, we introduce a simplification of the problem that we prove to be equivalent to solving the initial formulation. We evaluate our solutions on diverse scenarios of the aforementioned simulation. By comparing them to a greedy baseline, we show that our algorithms can learn implicit collaboration. 
The paper is organized as follows: \cref{sota} introduces the related works. The problem is formalized in \cref{radar-sec}. In \cref{task-allocation}, we describe the proposed approaches. \Cref{evaluation} presents the results of our simulations. \cref{conclusion} concludes the paper.

\section{Related works}
\label{sota}
Decentralized target allocation in a radar network has gained a lot of interest recently \cite{geng2020evolution}. For this problem, resolution through non-cooperative game formalism \cite{shi2018non} reaches good performance, but only considers mono-target allocation. Bundle auctions algorithm \cite{jiang2019research} overcomes this limitation; still, none of these approaches are able to model the improvement provided by multiple radars tracking the same target. Another suitable method is reward-based machine learning, that can either take the form of evolutionary computation \cite{es_scalable_alt_to_rl} or reinforcement learning (RL). Recent successes in multi-agent RL were obtained by adapting mono-agent deep RL methods to the multi-agent case, most of them based on policy gradient approaches \cite{DBLP:conf/nips/SrinivasanLZPTM18} with a centralized learning and decentralized execution framework \cite{DBLP:journals/aamas/Hernandez-LealK19,ma_actor_critic,emergent}. In the policy gradient algorithm family, actor-critic methods \cite{a2c_paper} relying on a centralized critic have empirically proven to be effective in the cooperative multi-agent partially observable case \cite{ma_actor_critic}. However, the size of the action space requires to adapt these approaches for our problem. 

\section{Problem statement}
\label{radar-sec}
In this paper, we consider that each radar can track a set of targets that move in a 2D space (for sake of simplicity, elevation is ignored). The targets are tracked with an uncertainty ellipse, computed using a linear Kalman model. We assume that the radars have a constant Signal-to-Noise Ratio (SNR) on target (adaptive waveform to preserve constant SNR on target), can communicate without limitations, have a limited field of view (FOV) and a budget representing the energy they can spend for tracking capabilities. Their Search mode are not simulated but taken into accoun the the constrained time budget for active track modes.

Let $n$ be the number of radars and $m$ the number of targets. In our model, an action of a radar is the choice of the set of targets to track. If multiple radars track the same target, the \emph{uncertainty area} is the superposition of the uncertainty ellipses. We define a utility function $\mathcal{U}$ measuring the global performance of the system. Let $l_i^j$ be the elementary radar $i$ budget needed to track target $j$, $L_i$ the budget of radar $i$, $\mathcal{E}_i^j$ the uncertainty ellipse on target $j$ for radar $i$ and $S(\mathcal{E}^i_j)$ the area of $\mathcal{E}^i_j$ (improvement is possible by considering covariance matrix of trackers). The problem can be expressed as a constraint optimization problem:
\begin{equation}
    \textrm{maximize }\mathcal{U} =\frac{1}{m} \sum\limits_{j=1}^m \exp\left[-S\left(\bigcap\limits_{i=1}^n \mathcal{E}_i^j\right)\right] \\\textrm{ such that: } \forall i\leq n, \sum_{j=1 }^{m}l_i^j\leq L_i
\end{equation}
\section{Task allocation in a radar network}
\label{task-allocation}


\subsection{Evolutionary Single-Target Ordering (ESTO)}
ESTO is a centralized training with decentralized execution black-box optimization method. Based on contextual elements, agents define a preference score for each target. They then choose the targets to track greedily based on this score, until their budget is met. The preference score is computed by a parametrized function optimized to maximize the utility using the Covariance Matrix Adaptation Evolution Strategy (CMA-ES) \cite{hansen2001completely}. CMA-ES optimizes the parameters by sampling them from a normal distribution and updating both the mean and the covariance matrix based on the value of the utility function. We modelized ESTO's preference score function with a linear model with 9 input features including information on the target, the radar, and the position of the other radars. We also propose a variant of ESTO, called ESTO-M which takes into account 2 additional features based on inter-radar communication: the estimated target utility and the evolution of the estimated utility from the previous step.

\subsection{The Reinforcement Learning approach}
\paragraph{Dec-POMDP formulation.}
\sloppy
Collaborative multi-agent reinforcement learning typically relies on the formalization of the problem as a Dec-POMDP \cite{pomdp_book}. It is defined as a tuple $\left<D, S, A, P, R, \Omega, O\right>$, where $D$ is the set of agents ($|D| = n$), $S$ is the state space, $\mathcal{A} = \mathcal{A}_1\times\dots\times \mathcal{A}_n$ is the joint action set, $P$ is the transition function,
$R$ is the reward function, $\Omega$ is the set of observations and $O$ is the observation function. A state can be described as a tuple containing the true position and velocity of each target, the position of the other radars, and the Kalman filter parameters for each radar-target pair. The transition dynamics include the update of the Kalman filters. The problem is not fully observable: the radars approximate the position and speed of the targets and can't access to the Kalman filters of the others. Radars rely on a $m\cdot n_f$ real-valued vector, with $n_f$ the number of features: estimated position, speed, etc.

Our approach is based on centralized learning and decentralized execution. Although this approach may lead to stationarity issues, it is widely used in practice and yields good results in multi-agent RL \cite{DBLP:conf/nips/FoersterAFW16,ma_actor_critic}. When applying policy, the probability of tracking targets beyond FOV is set to 0, the probabilities of remaining targets are updated accordingly. Agents share the same network but its input values  depend on the radar and the targets.

In this setting, the size of the action space corresponds to all the possible allocations of sets of targets to each agent $\forall i\leq n,\mathcal{A}_i = \mathcal{P}(\llbracket 1, m\rrbracket)$ and $|\mathcal{A}_i| = 2^{m}$, \textsl{i.e.} the powerset of all targets. To tackle this issue, we propose a new formalization of the problem where the radars choose the target sequentially, and prove the equivalence between the solutions of the two formalisms.

\sloppy

\begin{definition}[Sequential choice Dec-POMDP]
\label{def_new_mdp}

Let $M = \left<D, S, A, P, R, \Omega, O\right>$ be a Dec-POMDP for our problem. Let $M' = \left<D, S',A',P',R', \Omega, O'\right>$ with $S' = S\cup \left\{s\otimes \varepsilon| s\in S, \varepsilon \in \mathcal{P}(\llbracket 1, m \rrbracket\cup \dagger)^n\right\}$ where $\varepsilon$ is the set of targets tracked by each agent and the symbol $\dagger$ means that its allocation is finished; $s\otimes \varepsilon$ is a notation for the new (couple) state in $S'$. In a state  $ s\otimes \varepsilon\in S'$, the set of allowed actions of agent $i$ is $(\llbracket 1, m \rrbracket \cup \{\dagger \})\setminus \varepsilon_i$. The observation, state-transition and reward functions are defined as: 

$$
\begin{aligned}
\forall a\in A',\forall(\varepsilon, \varepsilon') \in \left(\mathcal{P}(\llbracket 1, m \rrbracket)^n\right)^2, \forall (s, s')\in S^2, O'(s\otimes\varepsilon,s'\otimes\varepsilon) = O(s,s')\\
    P'(s\otimes \varepsilon, a, s'\otimes \varepsilon') = \begin{cases}
    P(s,\varepsilon,s')\textrm{ if } a = (\dagger,\dots,\dagger), \varepsilon' = \varnothing\\
    1 \textrm{ if } s = s', \varepsilon'_j = \varepsilon_j\cup\{a_j\}\forall j \leq n\\
    0 \textrm{ else}
    \end{cases}\\
    R'(s\otimes \varepsilon, a, s'\otimes \varepsilon') = \begin{cases}
    R(s,\varepsilon,s')\textrm{ if } a = (\dagger,\dots,\dagger), \varepsilon' = \varnothing\\
    0 \textrm{ else}
    \end{cases}
\end{aligned}$$
\end{definition}
This new Dec-POMDP can be solved much more easily than the initial one. We now look for a solution of the initial Dec-POMDP from the sequential choice Dec-POMDP. In the rest of the article, we denote by $V$ (resp. $V'$) the averaged state value function in $M$, (resp. $M'$): $V_ \pi(\rho) =\mathbb{E}_{a_t \sim \pi, s_0 \sim \rho}\left(\sum_{t=1}^T R(s_t, a_t, s'_t)\right)$. For space reasons, only sketches of the lemma proofs are provided.

\begin{definition}[Policy transposition]
\label{def_phi}
We define the policy transition function $\phi$ from the set of policies in $M'$ to the set of policies in $M$ as 
\[ \forall j\in\llbracket 1, n\rrbracket,  \phi_j(\pi')(\varepsilon|\omega) =
\sum_{\{i_k \} = \varepsilon}
\pi_j'(i_1 \vert \omega) \pi_j'(i_2 \vert \omega \otimes \{i_1\}) \ldots \pi_j'(\dagger \vert s \otimes \{i_1, \ldots, i_p\})\]
\end{definition}

\begin{lemma}[Value equivalence]
\label{value_equal}
Let $\pi = \phi(\pi')$. Let $\rho$ be a probability distribution on $S$ and $\pi'$ a policy on $M'$. Then $V'_{\pi'}(\rho) = V_{\phi(\pi)}(\rho)$.
\end{lemma}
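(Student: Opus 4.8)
The plan is to exhibit a reward-preserving correspondence between trajectories of $M'$ run under $\pi'$ and trajectories of $M$ run under $\pi=\phi(\pi')$, from which equality of the two expected returns follows. First I would describe the shape of a run of $M'$. Identifying each $s\in S$ with $s\otimes\varnothing\in S'$, the definitions of $P'$ and $R'$ force every run to split into \emph{blocks} at a frozen $S$-component: inside a block each agent appends targets to its set, each such sub-transition being deterministic (probability $1$) and yielding zero reward, and only when the joint action equals $(\dagger,\dots,\dagger)$ does a genuine transition $s\to s'$ with law $P(s,\varepsilon,\cdot)$ fire, carrying reward $R(s,\varepsilon,s')$ and landing in $s'\otimes\varnothing$. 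This yields a projection $\Pi$ from $M'$-trajectories to $M$-trajectories collapsing the $t$-th block to the single $M$-transition $(s_{t-1},\varepsilon^{t},s_t)$, where $\varepsilon^{t}$ is the allocation accumulated in that block; by construction $\Pi$ preserves the cumulative reward of every trajectory, since the only non-zero rewards are the block-closing ones and they equal the corresponding $M$-rewards.

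Next I would compute the law of one block. Since the $S$-component is frozen at some $s$ throughout the block, each agent $j$ conditions on a fixed observation encoding $s$ together with its own accumulated choices (the argument $\omega\otimes\{i_1,\dots\}$ of $\pi_j'$ in \cref{def_phi}), and under decentralized execution the agents sample independently; hence the probability that the block ends with joint allocation $\varepsilon=(\varepsilon_1,\dots,\varepsilon_n)$ factorizes over $j$, the $j$-th factor being, after summing over all orders in which the targets of $\varepsilon_j$ may be chosen before $\dagger$, exactly $\phi_j(\pi')(\varepsilon_j\mid\omega)$. Hence the block produces allocation $\varepsilon$ with probability $\prod_j\phi_j(\pi')(\varepsilon_j\mid\omega)=\pi(\varepsilon\mid\omega)$ and then transitions to $s'\otimes\varnothing$ with probability $P(s,\varepsilon,s')$, i.e.\ a block reproduces exactly one step of $M$ under $\pi$. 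Along the way one checks that $\pi=\phi(\pi')$ is a genuine policy: summing $\phi_j(\pi')(\cdot\mid\omega)$ over all subsets telescopes to $1$, because at each node of the finite choice tree the $\pi_j'$-probabilities over the allowed actions (the remaining targets together with $\dagger$) sum to $1$.

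With the one-block identity established, I would push it forward over the horizon by induction on the block index $t=1,\dots,T$: the image under $\Pi$ of the trajectory law of $M'$ started from $\rho$ under $\pi'$ equals the trajectory law of $M$ started from $\rho$ under $\pi$. Writing $G$ for the (undiscounted) cumulative reward of a trajectory, reward-preservation of $\Pi$ together with this pushforward identity gives
\[
V'_{\pi'}(\rho)=\mathbb{E}_{\tau'\sim\pi'}\big[G(\tau')\big]=\mathbb{E}_{\tau'\sim\pi'}\big[G(\Pi(\tau'))\big]=\mathbb{E}_{\tau\sim\pi}\big[G(\tau)\big]=V_{\pi}(\rho),
\]
which is the claim.

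The step I expect to be most delicate is the bookkeeping for blocks in which the agents stop at different sub-steps: one must fix, and argue is harmless, a convention for what an agent that has already played $\dagger$ does while the others keep choosing, so that neither the accumulated joint allocation nor the per-agent factorization underpinning $\phi$ is affected. A lesser but necessary point is horizon matching: the $T$ steps of $M$ correspond to $T$ variable-length blocks of $M'$, which causes no trouble here only because the return is undiscounted and all intra-block rewards vanish.
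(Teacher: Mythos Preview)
Your argument is correct and rests on the same core identity the paper uses: the probability that one $M'$-block starting at $s$ accumulates allocation $\varepsilon$ and then transitions to $s'$ equals $\pi(\varepsilon\mid\omega)\,P(s,\varepsilon,s')$, since the intermediate $P'$ factors are all $1$ and the sum over orderings of the $\pi'$ factors is precisely the expression in \cref{def_phi}. The paper's sketch simply writes down this one-step identity and observes it collapses to the definition of $\phi$; your block decomposition, projection $\Pi$, and pushforward over $t$ supply the trajectory-level scaffolding the sketch omits, but the approach is essentially the same.
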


\begin{sproof}
By \cref{def_new_mdp}, the result holds iff $\forall (s,\varepsilon,s') \in S \times \mathcal P(\llbracket 1, m \rrbracket)^n \times S,$
\[
\rho(s)\pi(\varepsilon \vert \omega)P(s,\varepsilon, s') = 
\sum_{ \{i_k \} = \varepsilon}
\rho(s) P'(s,i_1,s\otimes i_1) \pi'(i_1 \vert \omega) \ldots P'(s \otimes \varepsilon, \dagger, s') \pi'(\dagger \vert \omega \otimes \varepsilon)\]
By using \cref{def_new_mdp} the equation simplifies exactly to the one of \cref{def_phi}.
\end{sproof}

\begin{lemma}[Surjectivity]
\label{phi_bij}
The mapping $\phi$ is surjective. Let $\pi$ be a policy on $M$.  Then $\pi = \phi(\pi')$ with $\pi'$ defined the following way ($\omega$ is omitted):
\begin{equation}\label{piprime}
\begin{aligned}
&\forall k\in\llbracket1,n\rrbracket, \forall j \in \llbracket 1, m \rrbracket \setminus \varepsilon_k, \pi_k'(\dagger \vert \varepsilon_k) = \frac{1}{N_{\varepsilon_k}} \pi_k(\varepsilon_k) \frac{\pi_k(A) \vert \varepsilon_k \vert !}{\vert A \vert \ldots(\vert A \vert - \vert \varepsilon_k \vert)}
\\&\textrm{and }
\pi_k'(j \vert \varepsilon_k) = \frac{1}{N_{\varepsilon_k}}\sum_{\substack{
A \subset \llbracket 1, m \rrbracket \\ \varepsilon \subset A  \\ j \in A 
}}
\textrm{ with }N_{\varepsilon_k}  = \sum_{\substack{
A \subset \llbracket 1, m \rrbracket \\ \varepsilon_k \subset A  
}} \frac{\pi_k(A) \vert \varepsilon_k \vert ! }{\vert A \vert \ldots (\vert A \vert- \vert \varepsilon_k \vert +1)}
\end{aligned}
\end{equation}
\end{lemma}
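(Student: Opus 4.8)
The plan is to read the candidate policy $\pi'$ of \eqref{piprime} as the one-step transition kernel of a concrete randomized procedure and to show that the procedure outputs target sets distributed exactly as $\pi$. Fix an agent $k$ and an observation $\omega$ once and for all: the statement is pointwise in both, and in $M'$ the observation does not change while an agent is still building up its set (by the definition of $O'$ in \cref{def_new_mdp}), so $\omega$ is well defined along the whole sub-phase. I write $\pi_k(A)$ for $\pi_k(A\mid\omega)$ and $\pi_k'(\cdot\mid\varepsilon_k)$ for $\pi_k'(\cdot\mid\omega\otimes\varepsilon_k)$; the constructed $\pi_k'(\cdot\mid\varepsilon_k)$ is supported on $\{\dagger\}\cup(\llbracket 1,m\rrbracket\setminus\varepsilon_k)$, so it respects the action constraints of $M'$.

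First I would check that \eqref{piprime} actually defines a policy, i.e.\ that for every $\varepsilon_k\subsetneq\llbracket 1,m\rrbracket$ the numbers $\pi_k'(\dagger\mid\varepsilon_k)$ and $\pi_k'(j\mid\varepsilon_k)$, $j\notin\varepsilon_k$, are nonnegative and sum to $1$. Nonnegativity is immediate; after multiplying by $N_{\varepsilon_k}$ the normalization amounts to
\[
\pi_k(\varepsilon_k)\;+\;\sum_{j\notin\varepsilon_k}\ \sum_{\substack{A\subseteq\llbracket 1,m\rrbracket\\ \varepsilon_k\cup\{j\}\subseteq A}}\frac{\pi_k(A)\,|\varepsilon_k|!}{|A|\cdots(|A|-|\varepsilon_k|)}\;=\;\sum_{\substack{A\subseteq\llbracket 1,m\rrbracket\\ \varepsilon_k\subseteq A}}\frac{\pi_k(A)\,|\varepsilon_k|!}{|A|\cdots(|A|-|\varepsilon_k|+1)}\;=\;N_{\varepsilon_k},
\]
which follows by exchanging the two sums on the left (each $A\supsetneq\varepsilon_k$ occurs for exactly the $|A|-|\varepsilon_k|$ choices $j\in A\setminus\varepsilon_k$) together with the telescoping identity $\frac{|A|-|\varepsilon_k|}{|A|\cdots(|A|-|\varepsilon_k|)}=\frac{1}{|A|\cdots(|A|-|\varepsilon_k|+1)}$; on both sides the contribution of $A=\varepsilon_k$ is exactly $\pi_k(\varepsilon_k)$. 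One also has to dispose of the degenerate case $N_{\varepsilon_k}=0$, which by the displayed identity holds iff $\pi_k(A)=0$ for all $A\supseteq\varepsilon_k$; then $\varepsilon_k$ is unreachable by the procedure below, so $\pi_k'(\cdot\mid\varepsilon_k)$ may be set arbitrarily (say $\pi_k'(\dagger\mid\varepsilon_k)=1$).

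Next comes the coupling. Consider the experiment: sample $A\sim\pi_k$; reveal the elements of $A$ one at a time in a uniformly random order; after the last one, output $\dagger$. Let $\varepsilon^{(p)}$ be the set of the first $p$ revealed elements. By exchangeability of a uniform random permutation, the conditional law of the next symbol given the past depends only on the current set $\varepsilon^{(p)}$ (not on the order of revelation, nor on $p$), so $\varepsilon^{(0)},\varepsilon^{(1)},\dots$ is a Markov chain, and I claim its kernel is precisely $\pi_k'$. Indeed $\Pr[\varepsilon^{(|\varepsilon_k|)}=\varepsilon_k]=\sum_{A\supseteq\varepsilon_k}\pi_k(A)\,\frac{|\varepsilon_k|!\,(|A|-|\varepsilon_k|)!}{|A|!}=N_{\varepsilon_k}$; the event ``current set $\varepsilon_k$, next symbol $j$'' has probability $\sum_{A\supseteq\varepsilon_k\cup\{j\}}\pi_k(A)\,\frac{|\varepsilon_k|!\,(|A|-|\varepsilon_k|-1)!}{|A|!}$; and ``current set $\varepsilon_k$, next symbol $\dagger$'' is exactly the event $A=\varepsilon_k$, of probability $\pi_k(\varepsilon_k)$. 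Dividing the last two by $N_{\varepsilon_k}$ reproduces the formulas of \eqref{piprime} verbatim.

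Hence the sequential process induced by $\pi_k'$ has the same law as this experiment, whose output is $A\sim\pi_k$ by construction; and by \cref{def_phi}, $\phi_k(\pi')(\varepsilon_k\mid\omega)$ is exactly the probability that the process terminates holding $\varepsilon_k$ (sum of path probabilities over the $|\varepsilon_k|!$ reveal-orders of $\varepsilon_k$, the final factor $\pi_k'(\dagger\mid\varepsilon_k)$ being order-independent). Therefore $\phi_k(\pi')(\varepsilon_k\mid\omega)=\pi_k(\varepsilon_k\mid\omega)$ for all $\varepsilon_k,\omega$, and for all $k$, so $\phi(\pi')=\pi$. Those preferring a bare computation can instead induct on $|\varepsilon_k|$: with $h(\varepsilon_k)$ the total path-probability of reaching partial state $\varepsilon_k$, one has $h(\varnothing)=1=N_\varnothing$ and $h(\varepsilon_k)=\sum_{j\in\varepsilon_k}h(\varepsilon_k\setminus\{j\})\,\pi_k'(j\mid\varepsilon_k\setminus\{j\})=N_{\varepsilon_k}$ using the induction hypothesis and the same telescoping identity, whence $\phi_k(\pi')(\varepsilon_k)=h(\varepsilon_k)\,\pi_k'(\dagger\mid\varepsilon_k)=\pi_k(\varepsilon_k)$. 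The only genuine difficulty is spotting the ``sample, then reveal in uniform random order'' coupling and observing that it is Markovian in the partial set (so that it is realized by a memoryless $\pi'$); the remainder is falling-factorial bookkeeping, the lone pitfall being to keep the degenerate $N_{\varepsilon_k}=0$ states out of every denominator.
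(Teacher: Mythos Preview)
Your proof is correct and shares the same two-step skeleton as the paper's sketch: first check that the numbers in \eqref{piprime} define a probability distribution (the paper phrases this as $\sum_{j\notin\varepsilon_k}\pi_k'(j\mid\varepsilon_k)=1-\pi_k'(\dagger\mid\varepsilon_k)$, which is your displayed identity), then verify $\phi(\pi')=\pi$ by computing path probabilities. Where the paper goes directly for the algebraic telescoping---establishing $\pi_k'(j_{l+1}\mid\varepsilon_l)=N_{\varepsilon_{l+1}}/\bigl(N_{\varepsilon_l}(l+1)\bigr)$, collapsing the product along any fixed ordering to $\pi_k(\varepsilon)/p!$, and summing over the $p!$ orderings---you instead identify $\pi_k'$ as the transition kernel of the ``sample $A\sim\pi_k$, then reveal in uniform random order'' chain, so that $\phi(\pi')=\pi$ follows from the coupling without writing the telescoping product explicitly. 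Your inductive alternative (showing the hitting probability $h(\varepsilon_k)=N_{\varepsilon_k}$) is the paper's computation summed over orderings. The coupling viewpoint is a genuine gain: it explains \emph{why} the formulas in \eqref{piprime} take the falling-factorial form (they are exactly the conditional reveal probabilities), and it makes the Markov property---hence the existence of a memoryless preimage---transparent rather than a post-hoc verification. You also handle the $N_{\varepsilon_k}=0$ degeneracy, which the paper's sketch omits.
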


\begin{sproof}

First, we verify that $\sum_{\substack{j \in \llbracket 1, m \rrbracket \backslash \varepsilon_k}} \pi_k'(j \vert \varepsilon_k)= 1 - \pi_k'(\dagger\vert \varepsilon_k).$
Then we show that $\pi_k = \phi(\pi_k')$. Let $\varepsilon = (i_1, \ldots, i_p)$, and $(j_1, \ldots, j_p)$ an arbitrary permutation of $\varepsilon$. Let $\varepsilon_l = (j_1, \ldots, j_l)$. Then, for all $l \in \llbracket 0, p-1 \rrbracket, \pi_k'(j_{l+1} \vert \varepsilon_l) = \frac{N_{\varepsilon_{l+1}}}{N_{\varepsilon_{l}}(l+1)} $ with $\varepsilon_0 = \varnothing$.
The product then simplifies to $\pi_k'(j_1) \pi_k'(j_2 \vert j_1) \ldots \pi_k'(\dagger \vert \{ j_1, \ldots, j_p\}) = \dfrac{\pi_k(\varepsilon_k)}{p!}$.
Summing among all $p!$ permutations of $\llbracket 1, p\rrbracket$, we verify that $\pi_k = \phi(\pi_k')$.
\end{sproof}

\begin{theorem}
Let $\pi_*'$ be an optimal policy in $M'$, then $\pi_* = \phi(\pi_*')$ is an optimal policy in $M$.
\end{theorem}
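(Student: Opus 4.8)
The plan is to obtain the theorem as a short corollary of \cref{value_equal} and \cref{phi_bij}, arguing by contradiction. Suppose $\pi_* = \phi(\pi_*')$ is not optimal in $M$. Then there exist an initial distribution $\rho$ and a policy $\pi$ on $M$ with $V_\pi(\rho) > V_{\pi_*}(\rho)$. By \cref{phi_bij}, $\phi$ is surjective, so I can choose a policy $\pi'$ on $M'$ with $\phi(\pi') = \pi$. Applying \cref{value_equal} to $\pi'$ and to $\pi_*'$ gives $V'_{\pi'}(\rho) = V_{\phi(\pi')}(\rho) = V_\pi(\rho)$ and $V'_{\pi_*'}(\rho) = V_{\phi(\pi_*')}(\rho) = V_{\pi_*}(\rho)$. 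Chaining these equalities with the assumed strict inequality yields $V'_{\pi'}(\rho) > V'_{\pi_*'}(\rho)$, contradicting the optimality of $\pi_*'$ in $M'$. Hence $\pi_* = \phi(\pi_*')$ is optimal in $M$.

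An equivalent, slightly more structural way to phrase the same argument, which I would include for clarity, is to compare the optimal values directly:
\[
\sup_{\pi}\, V_\pi(\rho) \;=\; \sup_{\pi'}\, V_{\phi(\pi')}(\rho) \;=\; \sup_{\pi'}\, V'_{\pi'}(\rho),
\]
where the first equality is surjectivity of $\phi$ (every policy on $M$ is hit, so the two ranges of values coincide) and the second is the pointwise identity of \cref{value_equal}. Then $V_{\pi_*}(\rho) = V'_{\pi_*'}(\rho) = \sup_{\pi'} V'_{\pi'}(\rho) = \sup_{\pi} V_\pi(\rho)$, which is exactly optimality of $\pi_*$.

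The main obstacle is essentially behind us: the delicate content — explicitly inverting $\phi$ while checking that the constructed $\pi'$ is a genuine probability distribution over $\llbracket 1,m\rrbracket\cup\{\dagger\}$ at each partial allocation, and verifying that the intermediate zero-reward, $s$-preserving transitions do not change the cumulative return — is precisely what \cref{phi_bij} and \cref{value_equal} provide. Two minor points I would still flag in the write-up: first, the horizon of $M'$ is larger than that of $M$ because each joint allocation is unfolded over up to $m+1$ sequential sub-steps, but since those sub-steps carry reward $0$ and leave the underlying state fixed, $V'$ and $V$ still agree, which is the statement of \cref{value_equal}; second, one should fix whether ``optimal'' means for a given $\rho$ or uniformly over all initial distributions — the contradiction argument above is stated for an arbitrary fixed $\rho$ and therefore covers both readings without change.
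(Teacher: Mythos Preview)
Your argument is correct and matches the paper's own proof, which simply states that the theorem follows directly from the surjectivity lemma (\cref{phi_bij}) and the value equivalence lemma (\cref{value_equal}). You have merely expanded that one-line justification into an explicit contradiction argument and an equivalent sup-based reformulation.
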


\begin{proof}
This follows directly from the surjectivity and value equivalence lemmas.
\end{proof}
\paragraph{Actor-Critic methods.}
To find a policy that maximizes the expected average reward, we used Proximal Policy Optimization (PPO) \cite{ppo_paper}, a variant of the actor-critic algorithm. Although the algorithm is only proved for MDPs, the use of a centralized critic has proven to be efficient in simple partially observable multi-agent settings \cite{ma_actor_critic}. PPO relies on an actor that plays episodes according to a parametrized policy and a critic that learns the state value function. After each batch of played episodes, the parameters of the two networks are updated according to the surrogate loss as in \cite{ppo_paper}.

\paragraph{Neural network architecture.}
The critic neural network architecture is a standard multi-layer perceptron. Regarding the actor, the first layer consists of $n_f$ neurons : an input tensor of size $(m, n_f)$ is passed to the network instead of a first layer of $n_{f}\cdot m$ neurons. The network consists of a \emph{feature extractor} of two layers reducing the number of features from 23 to 6 and a \emph{feature aggregator} consisting of two linear models $T$ and $O$, that represent respectively the contribution of the target itself, and the interest of the other targets. Intuitively, training at individual target level allows better feature extraction and generalization than a dense, fully connected architecture. Moreover, it allows to ensure full symmetry of the weights. However, this comes at the cost of expressiveness, as we use a special form of architecture for our actor.
Let $f$ be the extracted feature matrix : $f_i$ is the extracted features for target i. 
We compute the score $w_i$ of target $i$ as 
$w_i = T(f_i) + \frac{1}{m-1} \sum_{j=1, i \neq j }^{m} O(f_j)$. The process is converted to a probability using a softmax activation function and can be represented as:
\[3 @ 23 \textrm{ feat.} \rightarrow \textrm{feature extractor} \rightarrow 3 @ 6 \textrm{ feat.} \rightarrow \textrm{feature aggregator} \rightarrow 3 \textrm{ scores}\]
\section{Evaluation}
\label{evaluation}
\begin{figure}[!htbp]
     \centering
     \begin{subfigure}[t]{0.19\textwidth}
         \centering
         \includegraphics[width=\textwidth]{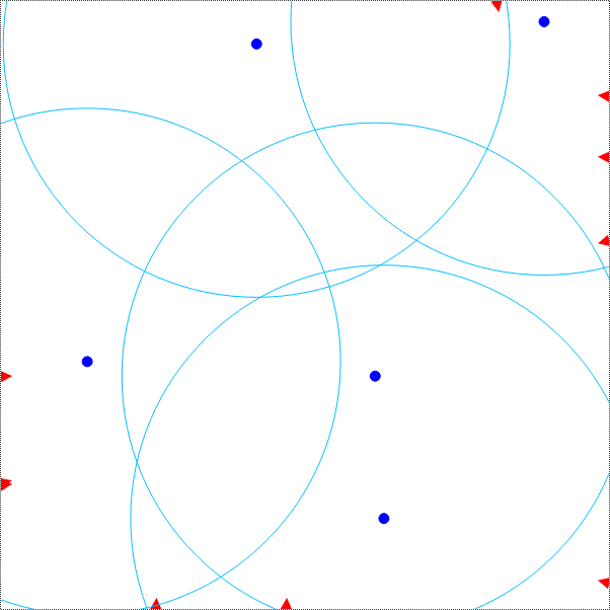}
         \caption{Non saturated well positioned radars}
     \end{subfigure}
     \begin{subfigure}[t]{0.19\textwidth}
         \centering
         \includegraphics[width=\textwidth]{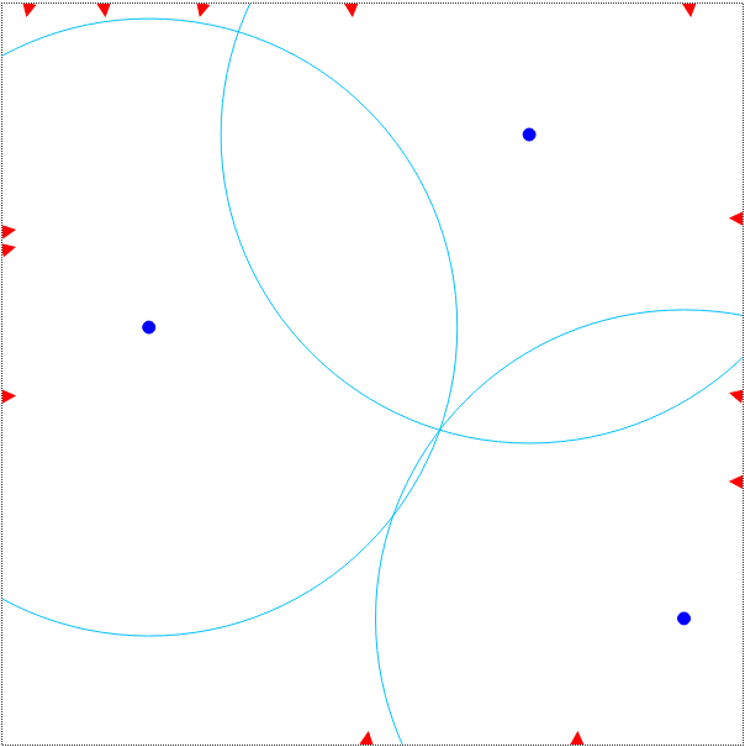}
         \caption{Few saturated well positioned radars}
     \end{subfigure}
     \begin{subfigure}[t]{0.19\textwidth}
         \centering
         \includegraphics[width=\textwidth]{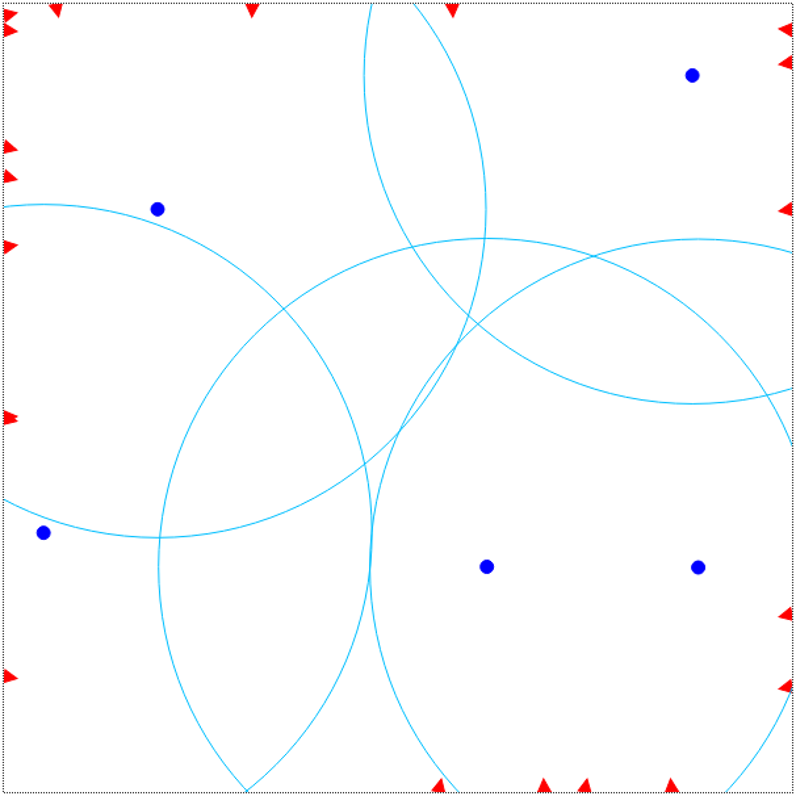}
          \caption{Several saturated well positioned radars}
     \end{subfigure}
          \begin{subfigure}[t]{0.19\textwidth}
         \centering
         \includegraphics[width=\textwidth]{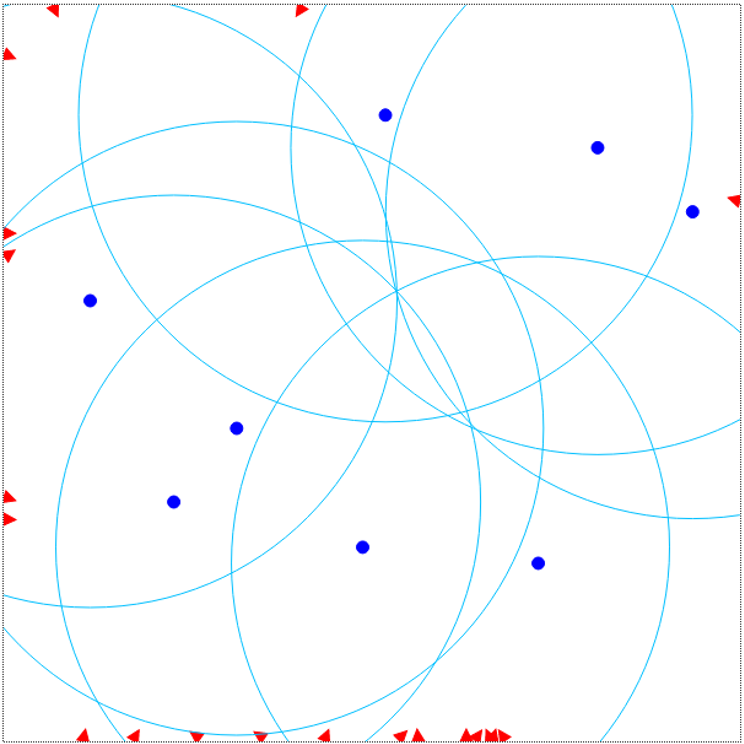}
          \caption{Many saturated well positioned radars}
     \end{subfigure}
    \begin{subfigure}[t]{0.19\textwidth}
         \centering
         \includegraphics[width=\textwidth]{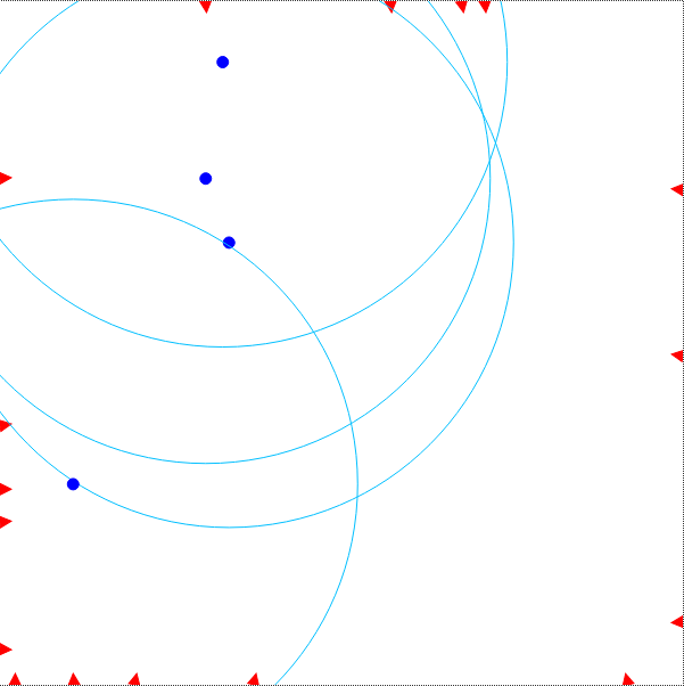}
          \caption{Saturated ill-positioned radars}
     \end{subfigure}
        \caption{The 5 validation scenarios (radars \& FOV in blue, targets in red)}
        \label{fig:three graphs}
\end{figure}

\begin{figure}[!htb]
\hfill
     \centering
     \begin{subfigure}[t]{0.38\textwidth}
         \centering
         \includegraphics[width=\textwidth]{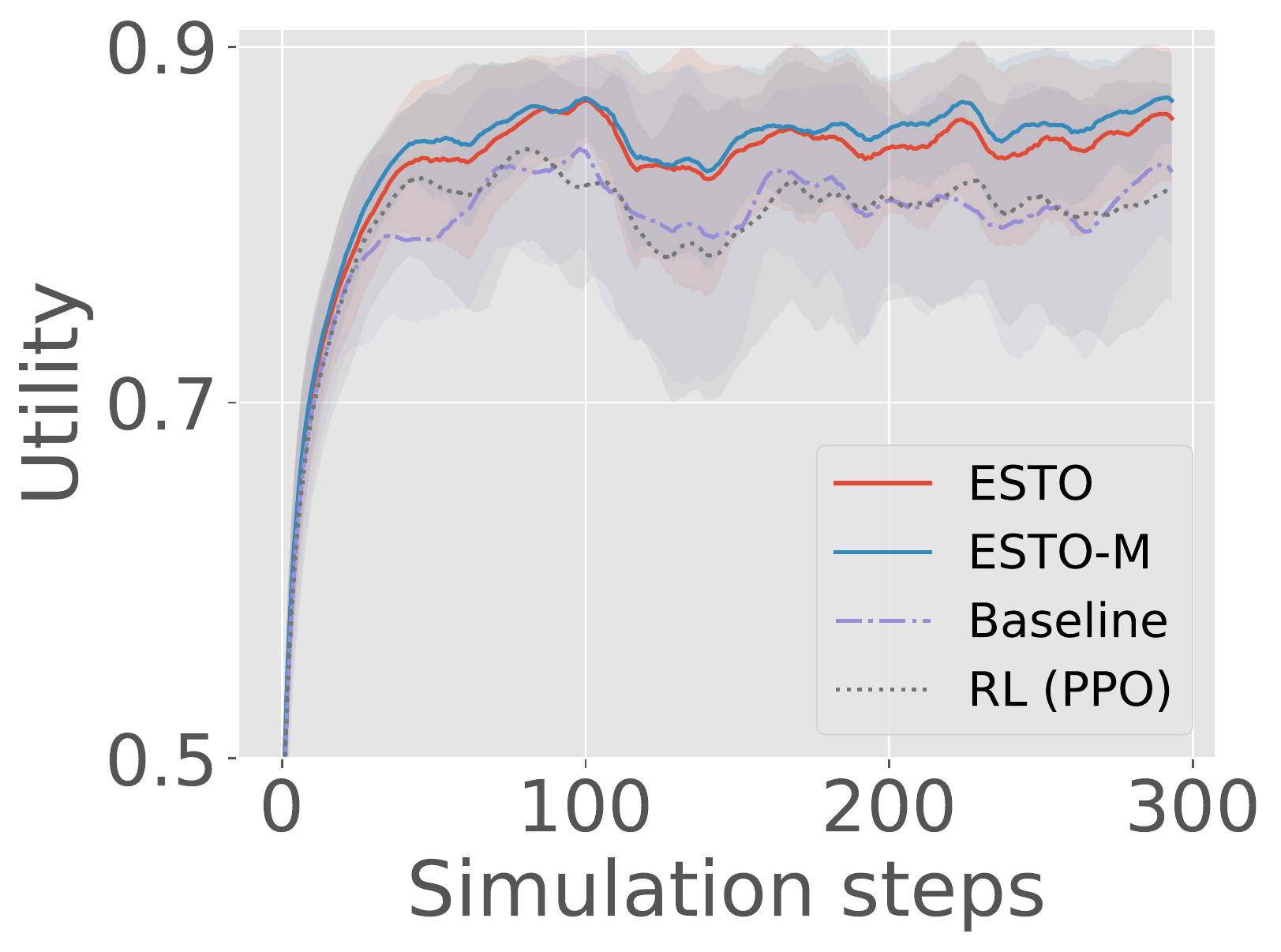}
     \subcaption{Scenario a}
     \end{subfigure}
     \begin{subfigure}[t]{0.3\textwidth}
         \centering
         \includegraphics[width=\textwidth]{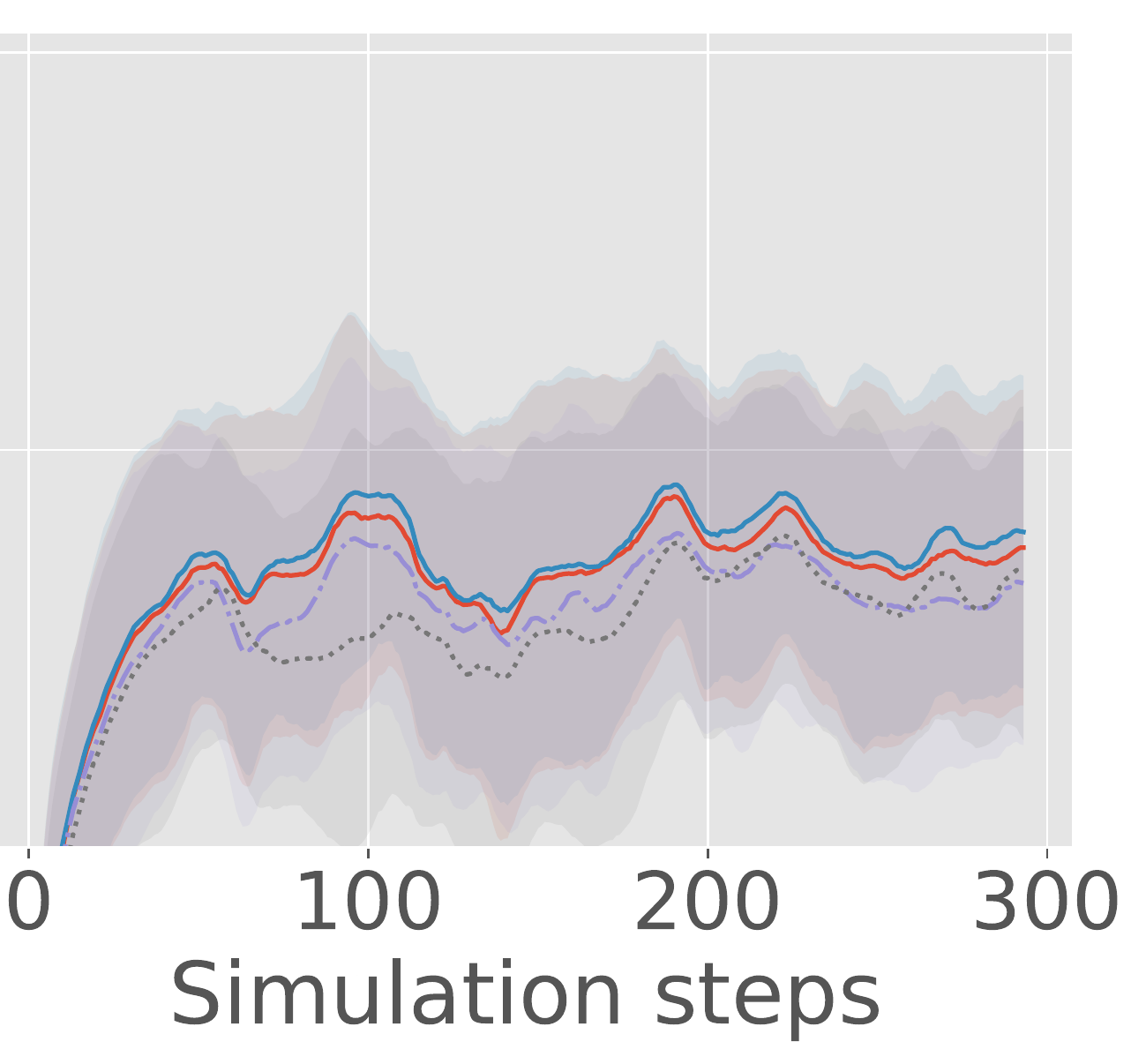}
     \subcaption{Scenario b}
     \end{subfigure}
     \begin{subfigure}[t]{0.3\textwidth}
         \centering
         \includegraphics[width=\textwidth]{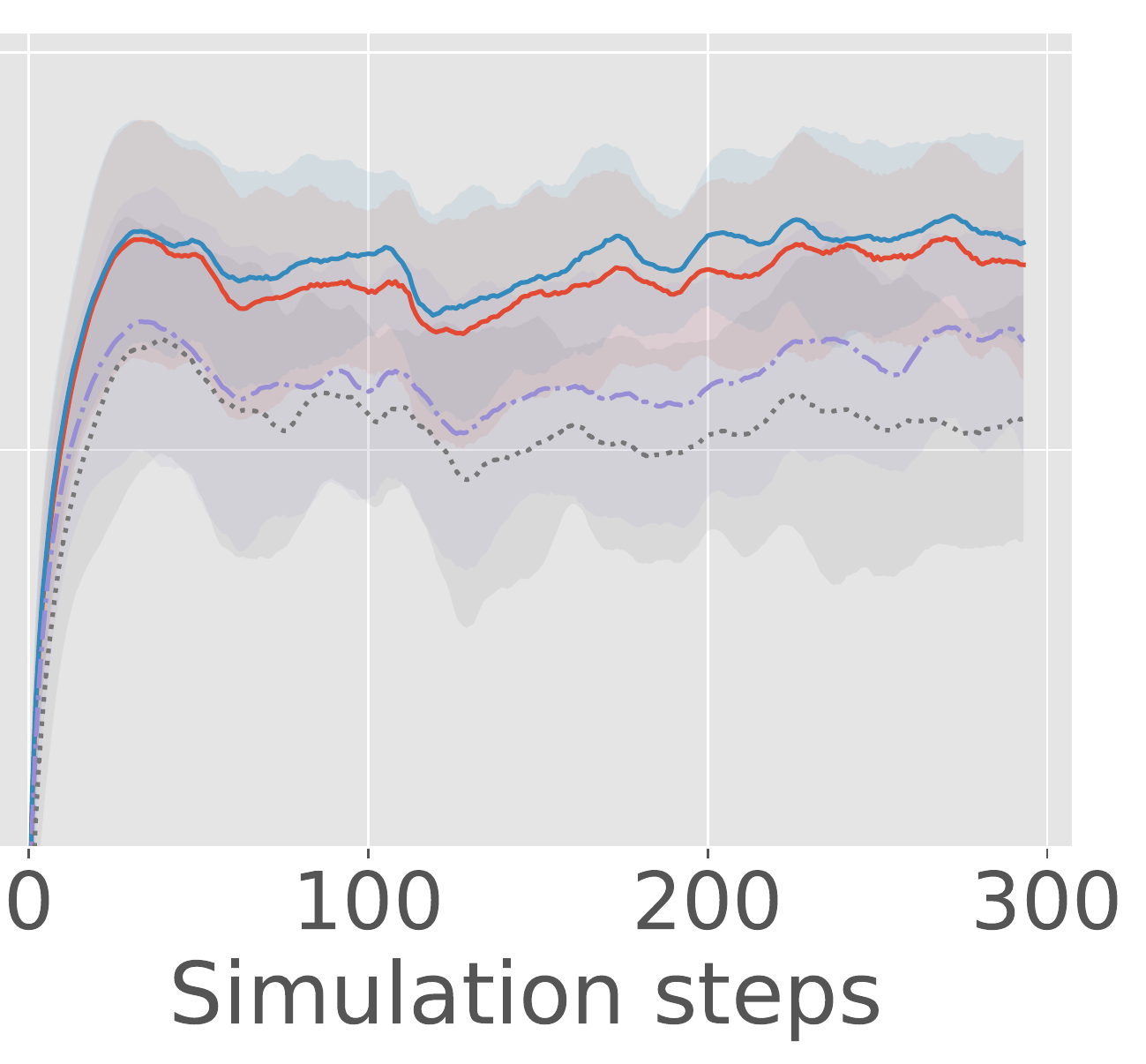}
     \subcaption{Scenario c}
     \end{subfigure}
     \begin{subfigure}[t]{0.38\textwidth}
         \centering
         \includegraphics[width=\textwidth]{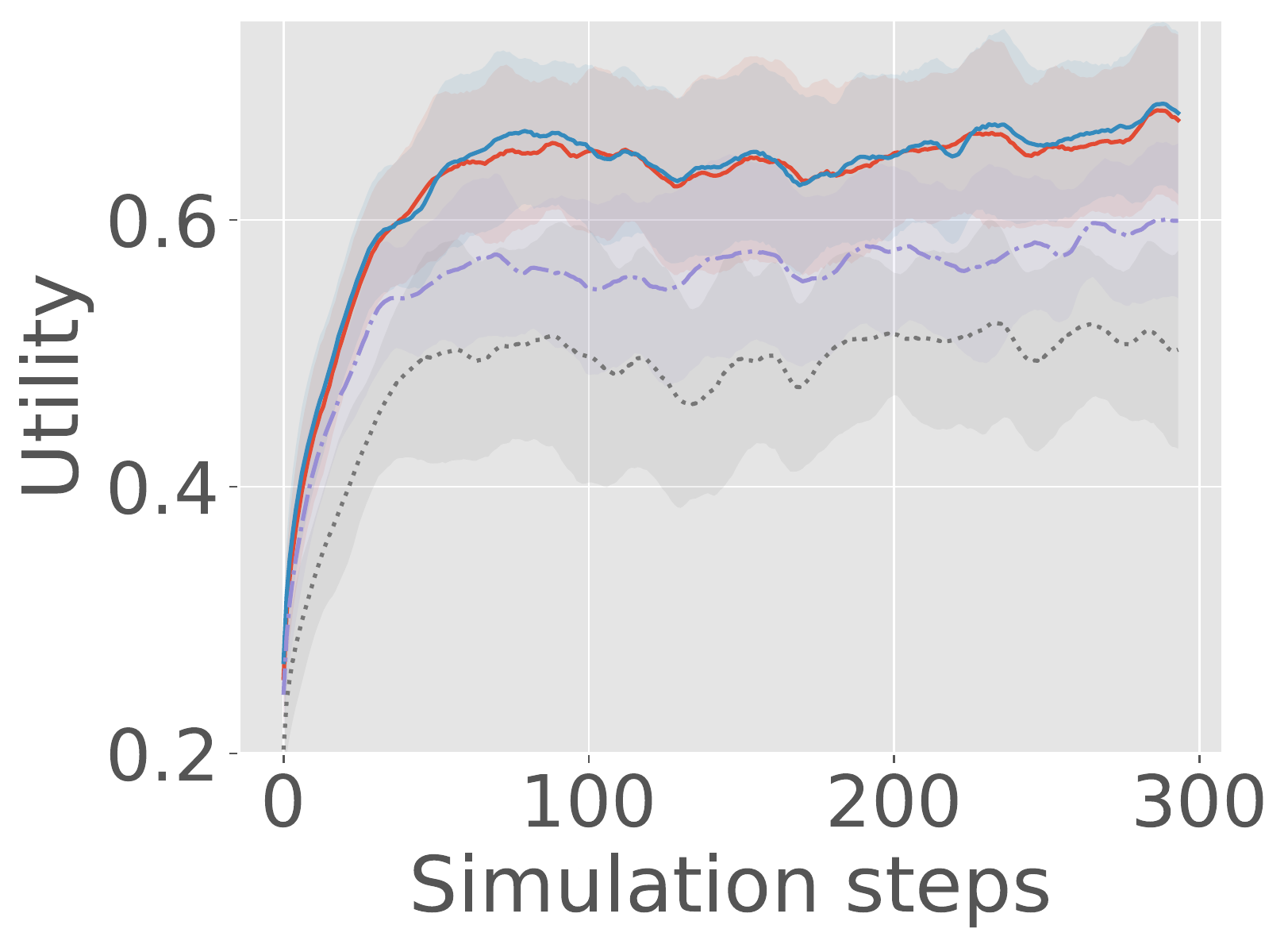}
     \subcaption{Scenario d}
     \end{subfigure}
     \begin{subfigure}[t]{0.3\textwidth}
         \centering
         \includegraphics[width=\textwidth]{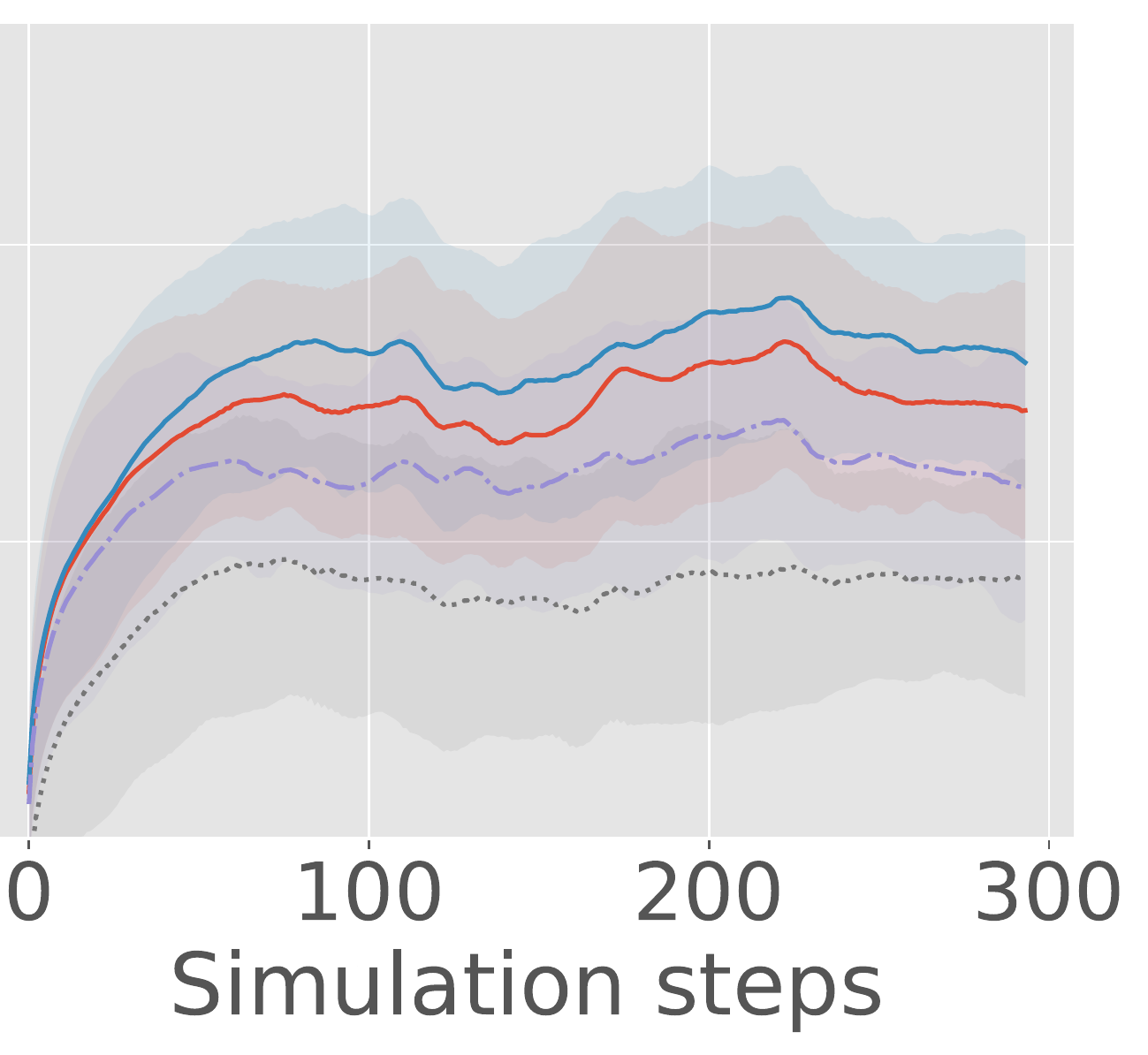}
     \subcaption{Scenario e}
     \end{subfigure}
     \begin{subfigure}[t]{0.3\textwidth}
         \centering
         \includegraphics[width=\textwidth]{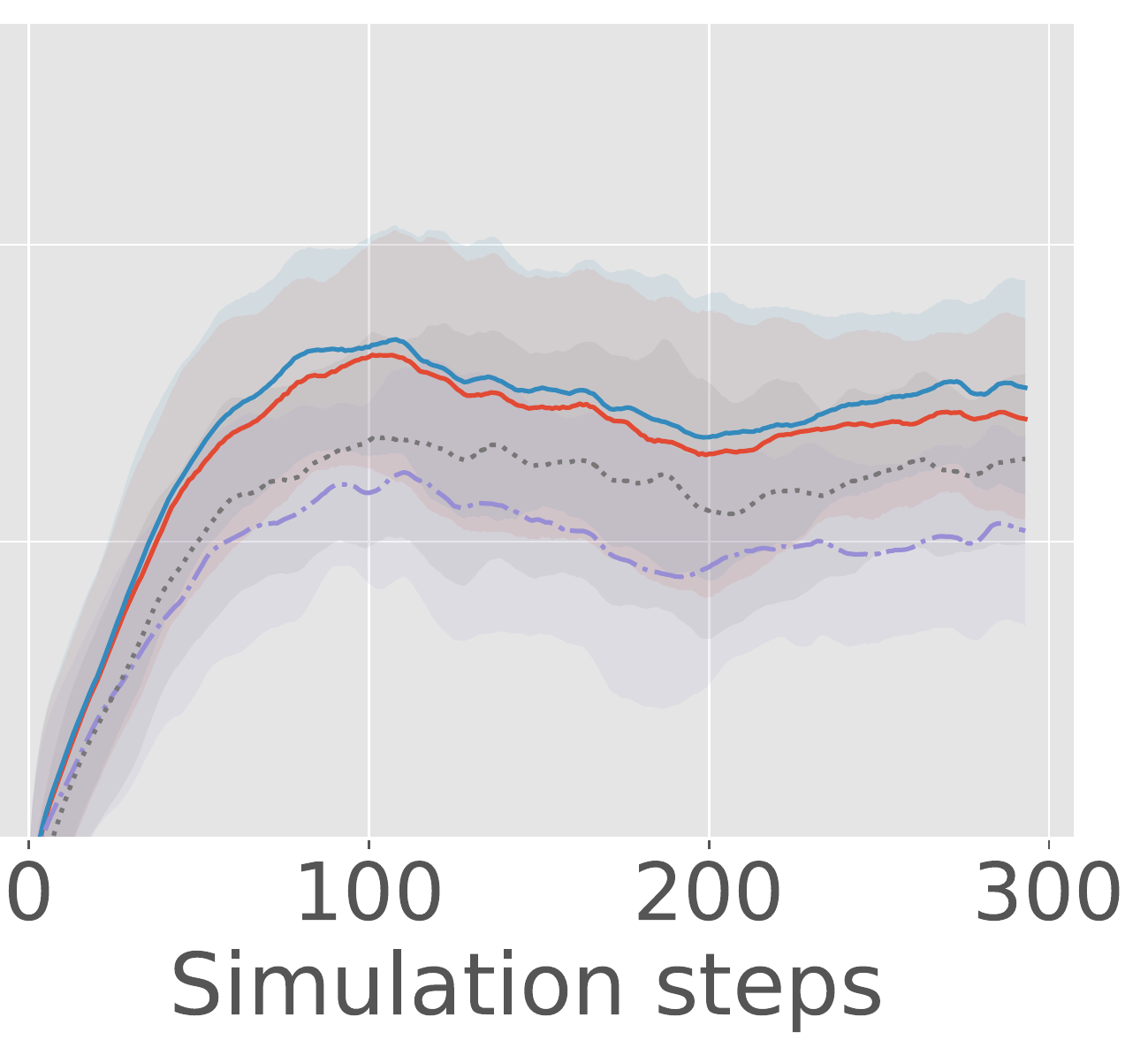}
     \subcaption{RL training scenario}
     \label{fig-RL}
     \end{subfigure}
        \caption{Utility of the radars over 300 steps with 16 random seeds}
        \label{fig-utility}
\end{figure}

\begin{figure}[!htb]
     \centering
     \begin{subfigure}[b]{0.38\textwidth}
         \centering
         \includegraphics[width=\textwidth]{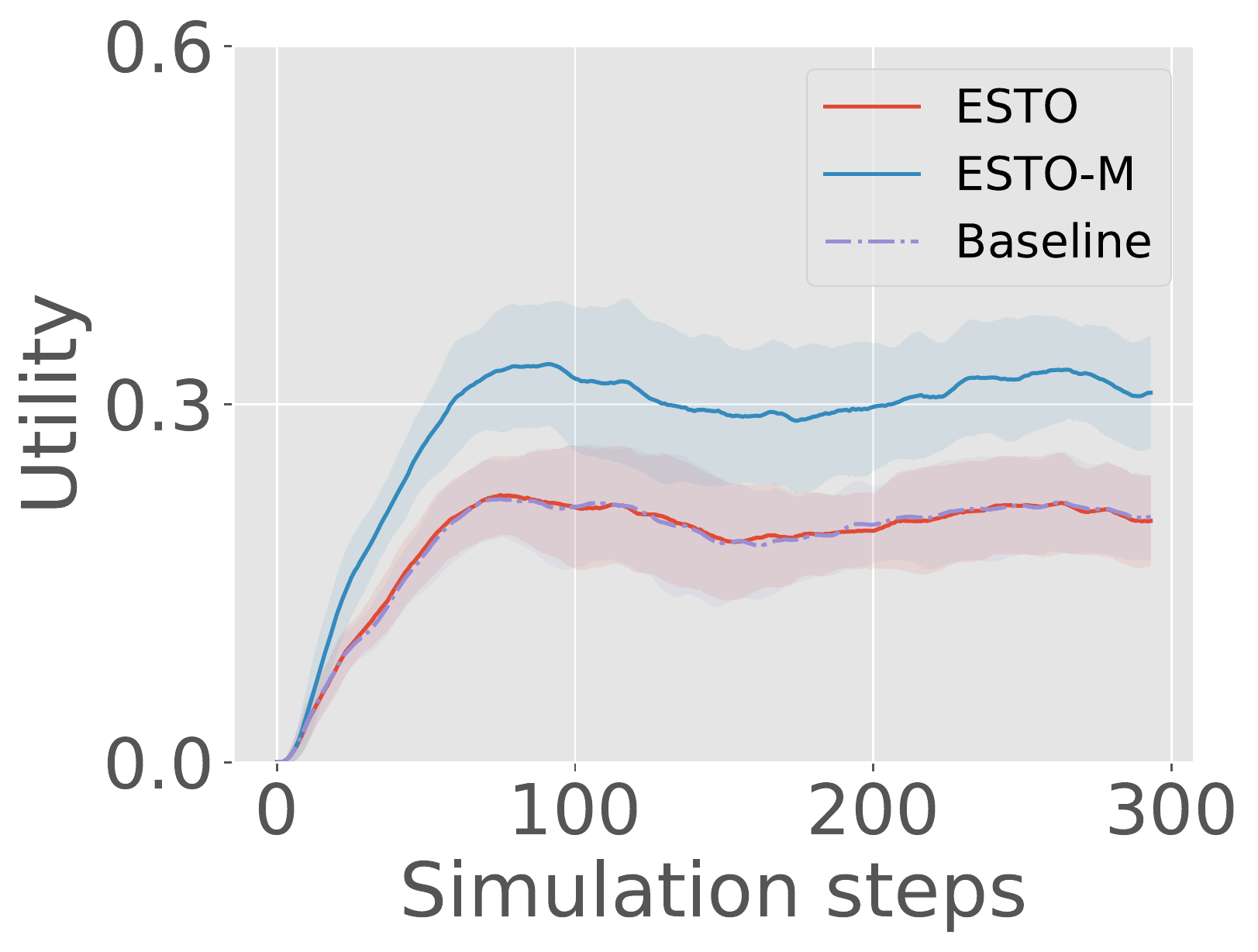}
          \subcaption{2 stacked radars}
     \end{subfigure}
     \begin{subfigure}[b]{0.3\textwidth}
         \centering
         \includegraphics[width=\textwidth]{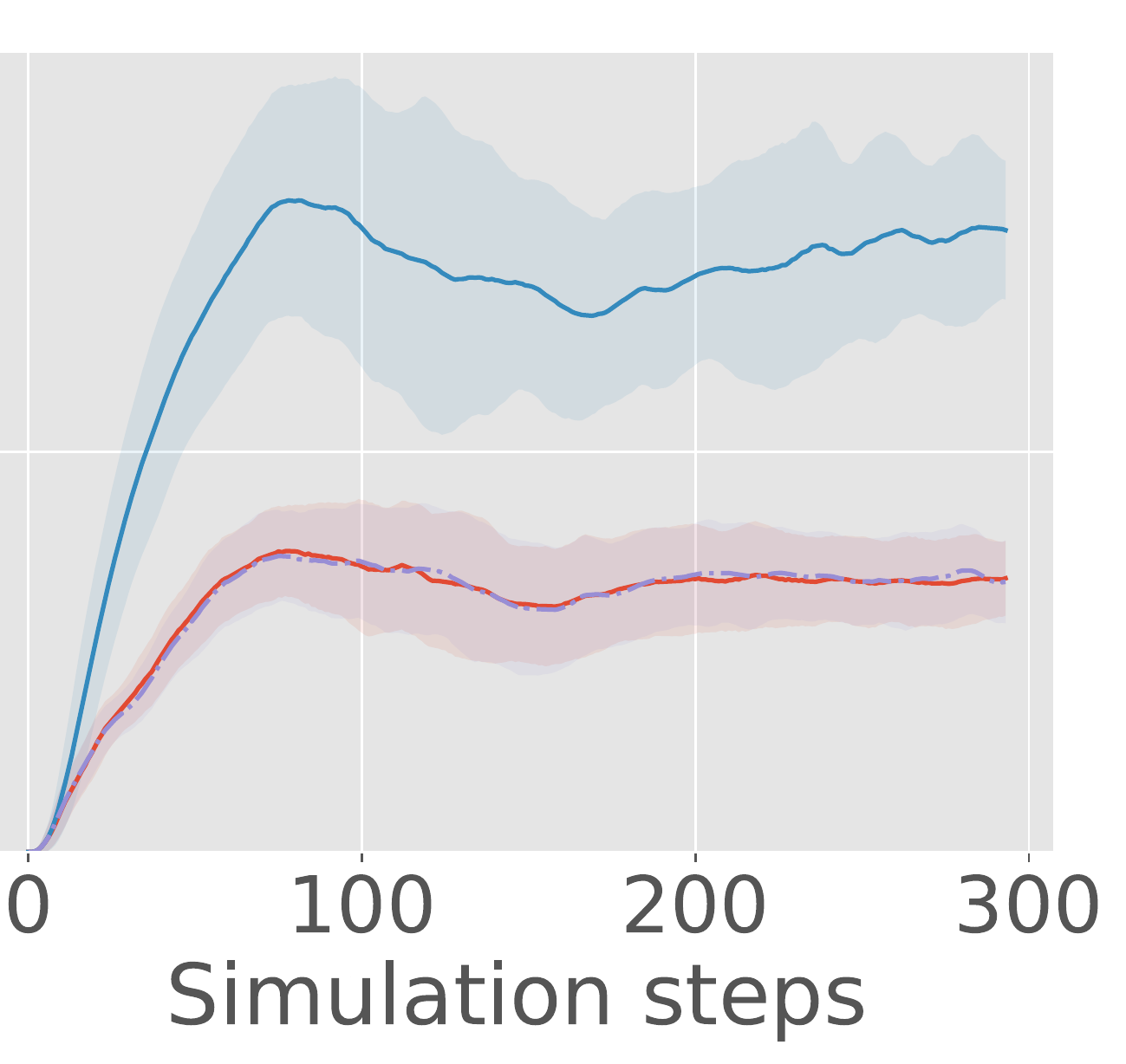}
         \subcaption{8 stacked radars}
     \end{subfigure}
     \hfill
        \caption{Performance for stacked radars scenarios}
        \label{fig:stacked}
\end{figure}
The evaluation is performed on a multi-agent simulator built with the mesa framework~\cite{masad2015mesa}. It consists of an environment of fixed size without obstacles with two kinds of agents: \emph{the radars} are implemented according to the model presented in \cref{radar-sec}. In order to simplify the model, we make the following assumptions: the radars rotate at the rate of 1 round/step; \emph{the targets} have a fixed speed and direction. They can turn by up to 15\textdegree{} and are replaced when they leave the simulation space. The simulator uses a random scheduler \textsl{i.e.} the agents act in a random order. The information they use may therefore be outdated, which allows to check the system resilience when the agents don't have up-to-date information. The ESTO approach is optimized on a scenario with 3 fixed radars and 20 targets with random trajectories over 10 runs to enhance generalization. The RL agent trains on the same settings, but over one run only due to time constraints. Our approaches are compared to a simple ``baseline'' approach (the radars greedily select the closest targets) on the 5 scenarios provided in \cref{fig:three graphs}, representing interesting configurations of the radar network.

As \cref{fig-utility} shows, both ESTO and ESTO-M significantly outperform the baseline on all scenarios. The performance gain seems to be correlated with the overlap of the agents field of view (FOV). When the FOV overlap is minimal, there is less need for cooperation between agents and the baseline is close to being optimal. Conversely, when the overlap is maximal, cooperation is needed to achieve a good performance. Indeed, when radars are stacked (\cref{fig:stacked}), ESTO-M performs significantly better than the baseline, even more so as more radars are stacked unlike ESTO. This indicates that the distance to the closest radar feature plays an important role in ESTO's collaboration. This is confirmed by the fact that we do not observe a significant difference between ESTO and ESTO-M in scenarios (a) to (e) when ESTO can use the feature. The RL approach relies on the reformulation of the problem (\cref{def_new_mdp}). It outperforms the baseline on the training scenario but seems to have poor generalization.

\section{Conclusion}
\label{conclusion}
 In this paper, we presented two novel reward-based learning algorithms for multi-radar multi-target allocation based on centralized learning and decentralized execution. The first one, ESTO, relies on CMA-ES to optimize a linear preference function that is used to order targets for a greedy selection. The second is an actor-critic based reinforcement learning approach relying on a specific Dec-POMDP formalization. While ESTO significantly outperforms our greedy baseline by learning cooperative behaviors, the RL approach still lacks generality to do so systematically. Training it longer on more diverse scenarios (target trajectory, radar positions,  number of steps) may help to prevent overfitting.
Moreover, future improvements may include: 
The development of a neural version of ESTO that would rely on a large scale CMA-ES implementation \cite{varelas2018comparative} to handle the increase in the size of the parameter space. Another improvement would be the development of a more realistic radar simulation taking into account \textsl{e.g.} changes in SNR and rotation speed, and include obstacles and targets of different classes and priorities. More importantly, other than simply tuning our models for better numerical performance, we would like to interface them with symbolic AI methods allowing them to leverage expert domain knowledge and opening the way for explainable AI (XAI) developments.


 \bibliographystyle{unsrt}
	\bibliography{biblio.bib}

\begin{thebibliography}{10}

\bibitem{dai2020sensor}
Jinhui Dai, Junkun Yan, Shenghua Zhou, Penghui Wang, Bo~Jiu, and Hongwei Liu.
\newblock Sensor selection for multi-target tracking in phased array radar
  network under hostile environment.
\newblock In {\em 2020 IEEE Radar Conference (RadarConf20)}, 2020.

\bibitem{shi2020joint}
Chenguang Shi, Lintao Ding, Wei Qiu, Fei Wang, and Jianjiang Zhou.
\newblock Joint optimization of target assignment and resource allocation for
  multi-target tracking in phased array radar network.
\newblock In {\em 2020 IEEE Radar Conference (RadarConf20)}, 2020.

\bibitem{geng2020evolution}
Zhe Geng.
\newblock Evolution of netted radar systems.
\newblock {\em IEEE Access}, 8:124961--124977, 2020.

\bibitem{shi2018non}
Chenguang Shi, Fei Wang, Mathini Sellathurai, and Jianjiang Zhou.
\newblock Non-cooperative game theoretic power allocation strategy for
  distributed multiple-radar architecture in a spectrum sharing environment.
\newblock {\em IEEE Access}, 6:17787--17800, 2018.

\bibitem{jiang2019research}
Haobo Jiang, Song Li, Chi Lin, Chuang Wang, Kaiqi Zhong, Guannan He, Qizhi
  Zhang, Yuanhao Zhao, and Jiayi Liu.
\newblock Research on distributed target assignment based on dynamic allocation
  auction algorithm.
\newblock In {\em Journal of Physics: Conference Series}, volume 1419, page
  012001, 2019.

\bibitem{es_scalable_alt_to_rl}
Tim Salimans, Jonathan Ho, Xi~Chen, Szymon Sidor, and Ilya Sutskever.
\newblock Evolution strategies as a scalable alternative to reinforcement
  learning, 2017.

\bibitem{DBLP:conf/nips/SrinivasanLZPTM18}
Sriram Srinivasan, Marc Lanctot, Vin{\'{\i}}cius~Flores Zambaldi, Julien
  P{\'{e}}rolat, Karl Tuyls, R{\'{e}}mi Munos, and Michael Bowling.
\newblock Actor-critic policy optimization in partially observable multiagent
  environments.
\newblock In {\em Advances in Neural Information Processing Systems 31: Annual
  Conference on Neural Information Processing Systems 2018, NeurIPS 2018, 3-8
  December 2018, Montr{\'{e}}al, Canada}, pages 3426--3439, 2018.

\bibitem{DBLP:journals/aamas/Hernandez-LealK19}
Pablo Hernandez{-}Leal, Bilal Kartal, and Matthew~E. Taylor.
\newblock A survey and critique of multiagent deep reinforcement learning.
\newblock {\em Auton. Agents Multi Agent Syst.}, 33(6):750--797, 2019.

\bibitem{ma_actor_critic}
Ryan Lowe, Yi~Wu, Aviv Tamar, Jean Harb, Pieter Abbeel, and Igor Mordatch.
\newblock Multi-agent actor-critic for mixed cooperative-competitive
  environments, 2017.

\bibitem{emergent}
Bowen Baker, Ingmar Kanitscheider, Todor Markov, Yi~Wu, Glenn Powell, Bob
  McGrew, and Igor Mordatch.
\newblock Emergent tool use from multi-agent autocurricula, 2019.

\bibitem{a2c_paper}
Volodymyr Mnih, Adrià~Puigdomènech Badia, Mehdi Mirza, Alex Graves,
  Timothy~P. Lillicrap, Tim Harley, David Silver, and Koray Kavukcuoglu.
\newblock Asynchronous methods for deep reinforcement learning, 2016.

\bibitem{hansen2001completely}
Nikolaus Hansen and Andreas Ostermeier.
\newblock Completely derandomized self-adaptation in evolution strategies.
\newblock {\em Evolutionary computation}, 9(2):159--195, 2001.

\bibitem{pomdp_book}
Frans Oliehoek and Christopher Amato.
\newblock {\em A Concise Introduction to Decentralized {POMDPs}}.
\newblock Springer, 01 2016.

\bibitem{DBLP:conf/nips/FoersterAFW16}
Jakob~N. Foerster, Yannis~M. Assael, Nando de~Freitas, and Shimon Whiteson.
\newblock Learning to communicate with deep multi-agent reinforcement learning.
\newblock In {\em Advances in Neural Information Processing Systems 29: Annual
  Conference on Neural Information Processing Systems 2016, December 5-10,
  2016, Barcelona, Spain}, pages 2137--2145, 2016.

\bibitem{ppo_paper}
John Schulman, Filip Wolski, Prafulla Dhariwal, Alec Radford, and Oleg Klimov.
\newblock Proximal policy optimization algorithms, 2017.

\bibitem{masad2015mesa}
David Masad and Jacqueline Kazil.
\newblock Mesa: an agent-based modeling framework.
\newblock In {\em 14th PYTHON in Science Conference}, pages 53--60, 2015.

\bibitem{varelas2018comparative}
Konstantinos Varelas, Anne Auger, Dimo Brockhoff, Nikolaus Hansen, Ouassim~Ait
  ElHara, Yann Semet, Rami Kassab, and Fr{\'e}d{\'e}ric Barbaresco.
\newblock A comparative study of large-scale variants of cma-es.
\newblock In {\em International Conference on Parallel Problem Solving from
  Nature}, pages 3--15, 2018.

\end{thebibliography}

\end{document}